\newtheorem{theorem}{Theorem}
\newtheorem{remark}{Remark}
\newtheorem{definition}{Definition}[section]
\title{Framework for Designing Filters of Spectral Graph Convolutional Neural Networks in the Context of Regularization Theory}
\author{%
  Asif Salim, Sumitra. S
   \\
  Department of  Mathematics,\\
  Indian Institute of Space Science and Technology,\\
  Thiruvananthapuram, India\\
  \texttt{asifsalim.16@res.iist.ac.in, sumitra@iist.ac.in} \\
}
\begin{document}

\maketitle

\begin{abstract}
 Graph convolutional neural networks (GCNNs) have been widely used in graph learning. It has been observed that the smoothness functional on graphs can be defined in terms of the graph Laplacian. This fact points out in the direction of using Laplacian in deriving regularization operators on graphs and its consequent use with spectral GCNN filter designs. In this work, we explore the regularization properties of graph Laplacian and proposed a generalized framework for regularized filter designs in spectral GCNNs. We found that the filters used in many state-of-the-art GCNNs can be derived as a special case of the framework we developed. We designed new filters that are associated with well-defined regularization behavior and tested their performance on semi-supervised node classification tasks. Their performance was found to be superior to that of the other state-of-the-art techniques.
\end{abstract}

\section{Introduction}

Convolutional neural networks (CNNs) \cite{726791} have been applied to a wide range of problems in computer vision and speech processing for various tasks such as image classification/segmentation/detection and speech recognition/synthesis etc.  The success of CNNs as a powerful feature extractor for data in the Euclidean domain motivated researchers to extend the concepts to non-euclidean domains such as manifolds and graphs \cite{8100059}.   

The GCNN designs mainly fall into two categories namely spatial and spectral approaches. Spatial approaches directly define the convolution on the graph vertices. They are based on a message-passing mechanism \cite{gilmer2017neural} between the nodes of the graph. 
Examples are network for learning molecular finger prints \cite{duvenaud2015convolutional}, Molecular graph convolutions \cite{kearnes2016molecular}, GraphSage \cite{hamilton2017inductive}, Gated graph neural networks \cite{li2015gated}, graph attention networks \cite{velivckovic2017graph} etc. Spectral filter designs are based on the concepts of spectral graph theory \cite{chung1997spectral} and signal processing on graphs \cite{shuman2013emerging}. The graph fourier transform has been utilized to define graph convolutions in terms of graph Laplacian. Examples are Spectral graph CNN \cite{bruna2013spectral}, Chebyshev network (ChebyNet) \cite{defferrard2016convolutional}, graph convolutional network  (GCN)  \cite{kipf2016semi}, graph wavelet network \cite{xu2019graph}, GraphHeat network \cite{xu2019graphheat} etc.

Regularization in graphs is realized with the help of graph Laplacian. A smoothness functional on graphs can be obtained in terms of Laplacian and by processing on its eigenfunctions, regularization properties on graphs can be achieved. This has been utilized for inference in the case of semi-supervised classification, link prediction etc, \cite{zhou2004regularization}, \cite{belkin2004regularization}, and for graph kernel designs \cite{smola2003kernels}.


In this work, we propose a framework for filter designs in spectral GCNNs from which its regularization behavior can be analyzed.  We observed that by processing on the eigenvalues of the Laplacian, different kinds of filters with their corresponding regularization behavior can be obtained. We identified the condition on the spectrum of the Laplacian that ensures regularized spectral filter designs. The filters used in state-of-the-art networks can be deduced as special cases of our framework.  We also proposed a new set of filters inspired by the framework and identified certain state-of-the-art models as special cases.
The rest of the paper is organized as follows. Section \ref{sec:related works} discusses the related work. Section \ref{sec:notations} defines the notations used in the manuscript. Spectral GCNNs are briefly discussed in Section \ref{sec:gcns}. Section \ref{sec:regularization} discusses the framework for designing regularized graph convolution filters. Section \ref{sec:experiments} discusses the experiments and results. Conclusions are made in Section \ref{sec:conclusion}.

\section{Related works}\label{sec:related works}

\textbf{Spectral GCNNs:} The spectral filter designs for GCNNs start with the work by Bruna et.al \cite{bruna2013spectral} where the graph Fourier transform of the signals on nodes is utilized. The filter is then defined in terms of the eigenvectors of graph Laplacian. In ChebyNet \cite{defferrard2016convolutional},  convolution is done in the spectral domain. Convolution filter is then defined as the linear combination of powers of the Laplacian. The scaling factors of the linear combination are considered as the parameters to be learned from the network.  Kipf et.al \cite{kipf2016semi} proposed
graph convolutional network as the first-order
approximation to spectral convolutions defined in ChebyNet.  GraphHeat \cite{xu2019graphheat} uses negative exponential processing on the eigenvalues of Laplacian to improve the smoothness of the function to be learned by penalizing high-frequency components of the signal. Li et al \cite{li2019label} proposed improved graph convolution networks (ICGN) by proposing higher-order filters used in \cite{kipf2016semi}.

\textbf{Regularization in graphs:} Regularization associated with graphs has emerged along with the development of algorithms related to semi-supervised learning \cite{belkin2006manifold}.  The data points are converted into a network and the unknown label of a data point can be learned in a transductive setting. A generalized regularization theory is formulated where the classical formulation is augmented with a smoothness functional on graphs in terms of its Laplacian. These concepts are used for semi-supervised learning and in related applications  \cite{belkin2004regularization}, \cite{zhou2004regularization}, \cite{zhu2003semi}, \cite{weston2012deep}. Smola et.al \cite{smola2003kernels} leverages these concepts to define support vector kernels on graph nodes and they also formulated its connection with regularization operators. Our work is mainly motivated by this work.

\textbf{Spectral analysis of GCNNs:} The low pass filtering property of ChebyNet and GCN networks are analyzed by Wu et.al \cite{pmlr-v97-wu19e}. They have shown that the renormalization trick \cite{kipf2016semi} applied to GCN shrinks the spectrum of the modified Laplacian from $[0, 2]$ to $[0, 1.5]$ which favors the low pass filtering process. Li et.al \cite{li2019label} and Klicpera et.al \cite{klicpera2019diffusion} have given the frequency response analysis of their proposed filters. Compared to these works, we form a generalized framework for designing filters based on their regularization property. Adding to this, Gama et.al \cite{gama2019stability} studies the perturbation of graphs, consequent effects in filters, and proposed the conditions under which the filters are stable to small changes in the graph structure. 

\section{Notations }\label{sec:notations}

We define $G=(V,E)$ as an undirected graph, where $V$ is the set of $n$ nodes and $E$ is the set of edges. The adjacency matrix is defined as $W$ where $W_{ij} = w_{ij}$ denotes the weight associated with the edge $[i, j]$  and otherwise $0$. The degree matrix, $D$, is defined as the diagonal matrix where $D_{ii} = \sum_j w_{ij}$. The Laplacian of $G$ is defined as $L := D-W$ and the normalized Laplacian is defined as $\tilde{L} := D^{-\frac{1}{2}} L D^{-\frac{1}{2}} = I-D^{-\frac{1}{2}}WD^{-\frac{1}{2}}$. As $\tilde{L}$ is a real symmetric positive semi definite  matrix, it has a complete set of orthonormal eigenvectors  $\{ u_l\}_{l=1}^{n} \in \mathbb{R}^n$, known as the graph  \textit{Fourier modes} and the associated ordered real non negative eigenvalues  $\{ \lambda_l\}_{l=1}^{n}$, identified as the \textit{frequencies} of the graph. Let the eigen decomposition of $\tilde{L}$ be $U \Lambda U^T$ where $U$ is the matrix of eigenvectors $\{ u_l\}_{l=1}^{n}$ and $\Lambda$ is the diagonal matrix of eigenvalues. Graph Fourier Transform (GFT) of a signal $f:V\rightarrow \mathbb{R}$   is defined as $\hat{f} = U^Tf$ and inverse GFT is defined as $f = U\hat{f}$ \cite{shuman2013emerging}. 


\section{Spectral graph convolution networks}\label{sec:gcns}


Spectral convolutions on graphs can be defined as the multiplication of a signal on nodes with a graph filter. We define  $f = (f_1,\dots, f_n) \in \mathbb{R}^n$ as a signal on $n$ nodes on a graph $G$.  Graph filter, $\mathcal{F}$, can be regarded as a function that takes $f$ as input and outputs another function $y$. The convolution operation can be written as, $y= \mathcal{F}f = U g_\theta(\Lambda) U^T f$, 
where  $g_\theta(\Lambda) \in \mathbb{R}^{n\times n} = \mbox{diag}(g_\theta(\lambda_1),\dots,g_\theta(\lambda_n))$ is a diagonal matrix.  The function $g_\theta() : \mathbb{R} \rightarrow \mathbb{R}$ (with parameters $\{\theta\}$) is defined as \textit{frequency response} function of the filter $\mathcal{F}$.

The graph filters associated with state-of-the-art GCNNs correspond to a unique \textit{frequency response} function as listed in Table \ref{gcnnlist}. In spectral CNN \cite{bruna2013spectral}, output is defined as, $ y = \sigma (U^T \mbox{diag} (\alpha) Uf)$, 
where $\sigma$ is a non-linearity function and $\alpha = \kappa(\theta)$ is a matrix characterized by a spline kernel parameterized by $\theta$. The drawback of this model is that it is of $\mathcal{O}(n)$ and not localized in space. 
These problems are rectified by ChebyNet \cite{defferrard2016convolutional} where a polynomial filter is used. It helped to reduce the number of parameters to $k < n$ and the powers of Laplacians in the filter design solves the localization issue.

The graph convolutional network (GCN) \cite{kipf2016semi} is the first-order approximation of ChebyNet except for a sign change $(\theta_0 = -\theta_1 = \theta)$ and improved graph convolutional network (IGCN) \cite{li2019label} uses higher orders of GCN filters which can be inferred from the Table \ref{gcnnlist}.  The \textit{frequency response} of all the networks is parameterized by $\theta$. The network is optimized to learn these parameters with respect to the associated loss function for the downstream task. In the case of ChebyNet and IGCN, $K \in \mathbb{N}$ and for GraphHeat, $s \geq 0$ is a hyper-parameter.

In our work, the objective is to propose a framework  to design regularized  graph convolution filters and for this, we make use of regularization theory over graphs via graph Laplacian as discussed in the following section. We also found that the spectral filters in Table \ref{gcnnlist} can be deduced as special cases of the proposed \textit{frequency response} functions in our  framework.

\section{Regularized graph convolution filters}\label{sec:regularization}
We consider the signals on the nodes of the graph are generated from the function $f:V \rightarrow \mathbb{R}$. It is being identified that the eigenvectors of $L$ corresponding to lower frequencies or smaller eigenvalues are smoother on graphs \cite{zhu2009introduction}.  The smoothness corresponding to the $k-$th eigenvector is,
\begin{equation}\label{eqn:smoothness}
\sum_{i \sim j} w_{ij}[u_k(i)-u_k(j)]^2 = u_k^TLu_k = \lambda_k
\end{equation}
From Equation \ref{eqn:smoothness}, we can infer that a smoothly varying graph signal corresponds to eigenvectors with smaller eigenvalues. This  is under the assumption that the neighborhood of topologically identical nodes would be similar. In real-world applications, the signals over the graph could be noisy. In this context, we should filter out high-frequency content of the signal as it contains noise and low-frequency contents (eigenvectors corresponding to lower eigenvalues) should be maintained as it contains robust information. In other words, smoothness corresponds to spatial localization in the graphs which is important to infer local variability of the node neighborhoods. This is where the regularization behavior of the \textit{frequency response} functions of a GCNN becomes important. Extending Equation \ref{eqn:smoothness},   we can  define the smoothness functional on graph $G$ as, $S_G(f)  =  \sum_{i \sim j} w_{ij}  (f_i-f_j)^2 = f^TLf$. 

 \begin{table}
	\caption{Frequency response function and output of spectral filters of GCNNs}
	\label{gcnnlist}
	\centering
	\begin{tabular}{lll}
		\toprule
		Network     				 & Freq. response ($g_\theta(\lambda)$)			& Output, $y$   \\
		\midrule
		ChebyNet \cite{defferrard2016convolutional}	& $  \sum_{k = 0}^{K-1}\theta_k \lambda^k$	& $  U(\sum_{k = 0}^{K-1} \theta_k \Lambda^k )U^T f = (\theta_0 I + \sum_{k=1}^{K-1} \theta_k \tilde{L}^k)f$ \\
		GCN \cite{kipf2016semi}    	 & $ \big(\theta (1 - \lambda) \big)$ & $ \theta(I-\tilde{L})f$ \\
		GraphHeat \cite{xu2019graphheat}   		 & $  \theta_0 + \theta_1 \mbox{exp }(-s\lambda))$		&   $   (\theta_0I + \theta_1 e^{-s\tilde{L}}) f $     \\
		IGCN \cite{li2019label} & $   \big(\theta (1 - \lambda) \big)^K$ & $ \theta(I-\tilde{L})^Kf$\\
		\bottomrule
	\end{tabular}
\end{table}


The smoothness property associated with $L$ or $\tilde{L}$ also indicates its potential application to design regularized filters for GCNNs. Since the spectrum of $\tilde{L}$ is limited in $[0, 2]$, in this work we use normalized Laplacian. In the following section, we discuss how graph Laplacian can be used for the regularization in graphs and we propose our framework to design regularized graph convolution filters.

\subsection{Graph Laplacian and regularization}
Regularization functionals on $\mathbb{R}^n$ can be written as
\begin{equation}\label{lap_cont}
\langle f, Pf \rangle = \int |\bar{f}(\omega)|^2 r(\Vert \omega \Vert^2) d\omega = \langle f, r(\Delta)f \rangle
\end{equation}
where $f \in L_2(\mathbb{R}^n)$, $P$ is a regularization operator, $\bar{f}(\omega)$ denotes Fourier transform of $f$, $r(\Vert \omega \Vert^2)$ is a frequency penalizing function and $r(\Delta)$ is a function that acts on spectrum of the continuous Laplace  operator $\Delta$ \cite{smola2003kernels}. Equation \ref{lap_cont} can be mapped into the case of graphs by making an analogy between the continuous Laplace  operator and its discrete counterpart which is the graph Laplacian $\tilde{L}$.
Analogous to Equation \ref{lap_cont}, Smola et.al \cite{smola1998connection} used a function of Laplacian, $r(\tilde{L})$, in the place of $P$ in Equation \ref{lap_cont} under Laplacian's capability to impart a smoothness functional on graphs. Hence regularization functionals on graphs can be written as $\langle f, Pf \rangle = \langle f, r(\tilde{L})f \rangle$, 
where $r(\tilde{L})= \sum_{i=1}^n r(\tilde{\lambda_i}) u_i u_i^T$.

The choice of $r(\lambda)$ should be in such a way that it favors the \textit{low pass filtering} of the graph convolution filter, i.e, the function $r(\lambda)$ should be high for a higher value of $\lambda$ to impose more penalization on high frequency (high eigenvalue) content of the graph signal. Similarly, the penalization of low frequency should be less. Hence we name $r(\lambda)$ as \textit{regularization function}. The examples for choices of $r(\lambda)$ are listed in the second column of Table \ref{filters:table} and the functions are plotted in Figure \ref{filters:graphplot}.

%
%
%
 \begin{table}
	\caption{Filters, corresponding regularization function $(r(\lambda))$ and its filter definition}
	\label{filters:table}
	\centering
	\begin{tabular}{lll}
		\toprule
		Filter     				 & Regularization function $(r(\lambda))$ 			& Filter definition   \\
		\midrule
		Regularized Laplacian    & $1 + s\lambda, \; s > 0$  										&  $(I + s\tilde{L})^{-1}$ \\
		Diffusion				 & $exp \;\big(s\lambda\big), \; s > 0$  					& $exp \;(-s\tilde{L})$ \\
		$p$-step random walk    	 & $(aI - \lambda)^{-p} \; a \geq 2, \; p \in \mathbb{N}$								& $(aI - \tilde{L})^p$ \\
		Cosine    		 & $\big(cos \;\frac{\lambda\pi}{4}\big)^{-1}$		&   $cos\; (\frac{\tilde{L}\pi}{4})$     \\
		
		\bottomrule
	\end{tabular}
\end{table}

\begin{figure}[ht] 
	
	\begin{center}
		\centerline{\includegraphics[scale=0.345]{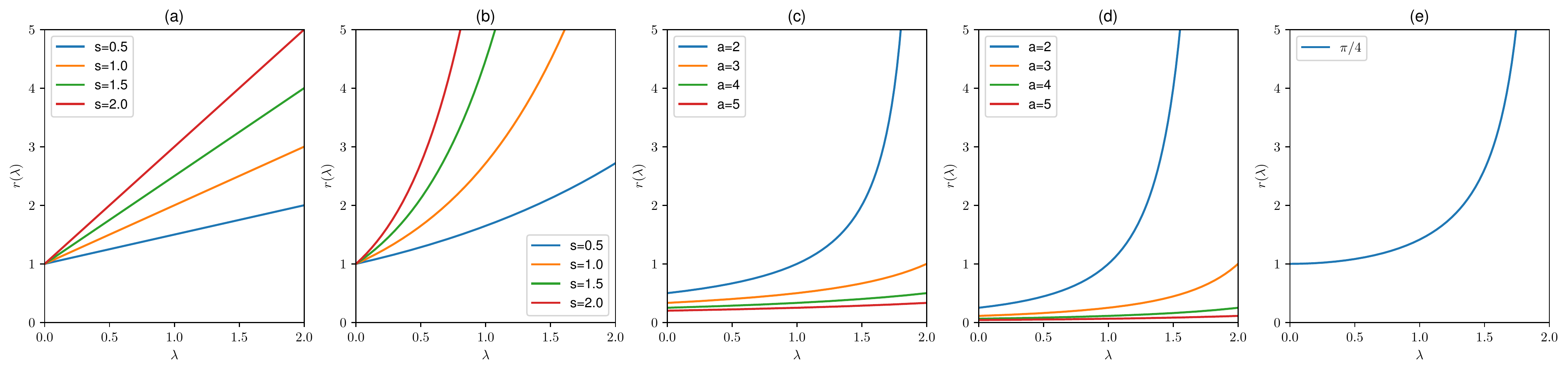}}
		\caption{Regularization function, $r(\lambda)$. (a) regularized Laplacian ($s = \{0.5, 1, 1.5,2\}$),
			(b) diffusion function ($s = \{0.5,1,1.5,2\}$), (c) one-step random walk ($a = \{2,3,4,5\}$), (d) 2-step random walk ($a = \{2,3,4,5\}$), (e) inverse cosine function.}
		\label{filters:graphplot}
	\end{center}
	
\end{figure}

\begin{remark} \label{remark1}
	There exists an inverse relationship between the \textit{regularization function}  and \textit{frequency response} function. To impose high penalization on higher frequencies, \textit{regularization function} is supposed to be a monotonically increasing function of the eigenvalues. At the same time, for the low pass filtering characteristics, to make high filter gain on a lower frequency and vice versa, the \textit{frequency response} function is supposed to be a monotonically decreasing function of the eigenvalues.
\end{remark}

\begin{remark}\label{remark2}
	Smola et.al \cite{smola1998connection} has shown that $P^{-1}$ (psuedo-inverse if not invertible) is a positive semidefinite (p.s.d) support vector kernel in a reproducing kernel Hilbert space (RKHS) $\mathcal{H}$ where $P \in \mathbb{R}^{n \times n}$ is a positive semidefinite regularization matrix and $\mathcal{H}$ is the image of $\mathbb{R}^n$ under $P$. 
\end{remark}

Remark \ref{remark1} and \ref{remark2} points out in the direction of using the inverse of a \textit{regularization function}, via $\big(r(\tilde{L})\big)^{-1}$, as a \textit{frequency response} function for spectral filters. In this context, regularized filters corresponding to their \textit{regularization functions} can be obtained as, 
\begin{equation}\label{kernel}
\mathcal{F}=\big(r(\tilde{L})\big)^{-1} = \sum_{i=1}^n \big(r(\lambda)\big)^{-1}u_iu_i^T
\end{equation}
where $\{(u_i,\lambda_i)\}$ is the eigensystem of $\tilde{L}$ and $\big(r(\lambda)\big)^{-1}$ is the reciprocal function of \textit{regularization function}. In the context of Remark \ref{remark2}, we can see that filters of GCNNs defined as per Equation \ref{kernel} are also support vector kernels on graphs provided their parameterization (if any) maintains positive semidefiniteness. The detailed discussion is provided in Appendix \ref{app1}.




\subsection{A framework for designing  graph convolution filters}\label{sec:framework}

Regularized graph filters  which are defined as follows can be designed by making use of Equation \ref{kernel}.
\begin{definition}[Regularized graph convolution filter]: The graph filter whose \textit{frequency response} function  $g_\theta(\lambda)$ behaves like a low-pass filter, i.e,  $g_\theta(\lambda)$ should be a monotonically decreasing function in $\lambda$ or equivalently the associated regularization function $r(\lambda) = \big(g_\theta(\lambda)\big)^{-1}$ should be a monotonically increasing function in $\lambda$. 
\end{definition} 

The design strategy for regularized graph convolution filters is summarized in Theorem \ref{th1}.
\begin{theorem}\label{th1}
	
A monotonically increasing function in the interval $[0, \lambda_{max}]$ is a valid regularization function to design regularized graph convolution filters using \eqref{kernel} where $\lambda_{max}$ is the maximum eigenvalue of $\tilde{L}$.
\end{theorem}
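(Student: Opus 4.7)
The plan is to unwind the definitions and exploit the fact that the spectrum of $\tilde{L}$ is contained in $[0,\lambda_{max}]$, so only the behavior of $r$ on that interval matters. First I would take the filter constructed from $r$ via Equation \ref{kernel}, namely $\mathcal{F} = \sum_{i=1}^{n}\bigl(r(\lambda_i)\bigr)^{-1} u_i u_i^{T}$, and read off its frequency response: by comparison with the generic form $U g_\theta(\Lambda)U^{T}$ from Section \ref{sec:gcns}, the associated frequency response function is simply $g(\lambda) = 1/r(\lambda)$. So verifying that $\mathcal{F}$ is a regularized graph convolution filter in the sense of the Definition preceding the theorem reduces to checking that $1/r$ is monotonically decreasing on the range of eigenvalues actually encountered, i.e.\ on $[0,\lambda_{max}]$.

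Second, I would invoke the monotonicity hypothesis: for any $\lambda_1,\lambda_2 \in [0,\lambda_{max}]$ with $\lambda_1 < \lambda_2$, we have $r(\lambda_1) \le r(\lambda_2)$. Provided $r$ is strictly positive on the interval (which is the implicit standing assumption, mirrored by every example in Table \ref{filters:table}), taking reciprocals reverses the inequality, giving $1/r(\lambda_1) \ge 1/r(\lambda_2)$. Hence $g(\lambda) = 1/r(\lambda)$ is monotonically non-increasing on $[0,\lambda_{max}]$, which matches the Definition of a regularized graph convolution filter exactly. Since the filter acts on a signal only through its values at the eigenvalues $\lambda_i \in [0,\lambda_{max}]$, behavior of $r$ outside this interval is irrelevant, which is why the hypothesis is stated on $[0,\lambda_{max}]$ rather than on all of $\mathbb{R}_{\ge 0}$.

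The only real subtlety, and the place I would be most careful, is the positivity/invertibility of $r(\tilde{L})$. If $r(0)=0$ (as might happen at the zero eigenvalue of $\tilde{L}$) then $\bigl(r(\tilde{L})\bigr)^{-1}$ is not literally defined; one must either restrict to strictly positive $r$, or interpret the inverse as the Moore--Penrose pseudoinverse as already suggested in Remark \ref{remark2}. With that reading, the argument above goes through unchanged because only the nonzero eigenvalues contribute to the pseudoinverse, and on them monotonicity of $r$ still forces monotonic decrease of $1/r$. Once this point is addressed, the theorem follows immediately from the definition, so the proof is essentially a short verification rather than a computation.
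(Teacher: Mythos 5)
Your proof is correct and follows essentially the same route as the paper's: read off the frequency response $g(\lambda)=1/r(\lambda)$ from Equation \eqref{kernel}, use monotonic increase of $r$ on $[0,\lambda_{max}]$ to get monotonic decrease of $g$, and conclude the filter is low-pass in the sense of the Definition. Your explicit handling of positivity of $r$ (and the pseudoinverse fallback when $r$ vanishes at an eigenvalue) is a small but welcome tightening of a point the paper's proof leaves implicit.
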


\begin{proof}
	Note that Laplacian $\tilde{L}$ can be decomposed as $U \Lambda U^T$. 
	Equivalently, this decomposition can be considered as a sum of  matrix of projections onto  one-dimensional subspace spanned by the eigenvectors (Fourier basis), i.e, $\tilde{L} = \sum_{i=1}^n \lambda_i P_{\lambda_i}$, where the linear map $P_{\lambda_i}(x) =  u_iu_i^Tx$ is the orthogonal
	projection onto the subspace spanned by the Fourier basis vector $u_i$. 
	Consider a regularization function $r(\lambda)$ that is monotonically increasing.  
	Note that frequency response function $g_\theta(\lambda) = 1/ r(\lambda)$ is monotonically decreasing. The filtering operation of a signal $f$ by the filter $\mathcal{F}$ can be written in terms of the mappings $P_{\lambda_i}, 1 \leq i \leq n$, i.e,  $$y = \mathcal{F}f = \sum_{i=1}^n g(\lambda_i) P_{\lambda_i}(f) = \sum_{i=1}^n g_\theta(\lambda_i) u_iu_i^T f$$ where $\{(\lambda_i, u_i)\}$ constitutes the eigensystem of $\tilde{L}$. 
	The values $g_\theta(\lambda_i)$ can be considered as weights that measure the importance of the corresponding eigenspace in the  amplification or attenuation of the signal $f$. As $g(\lambda)$ is monotonically decreasing the weight of eigenspace corresponding to eigenvectors of lower frequencies (lower eigenvalues) of $\tilde{L}$ are higher and vice versa. The filter gain of the lower frequency components of $f$ are higher compared to the higher frequencies or $\mathcal{F}$ is a low pass filter. The validity of $r(\lambda)$ is established by the low pass filtering and hence the proof.  
\end{proof} 

Note that the  theorem also holds for other definitions of normalized or unnormalized Laplacian and any spectrum in $[0, \infty]$, provided the monotonicity property is maintained. In the context of Theorem \ref{th1}, to design regularized filters, it is enough to pick a \textit{regularization function} $r(\lambda)$ with the monotonicity property and to plug into Equation \ref{kernel} to define the filter. 

\subsubsection{Factors affecting the choices of the regularization function} \label{sec:reg_discuss}
We can have custom designs for the regularization function. However,
to define a closed-form expression for the filter $\mathcal{F}$, the \textit{regularization function} $r(\lambda)$ should be able to be expressed in a closed-form. Hence the choice of $r(\lambda)$ should be limited to the functions with power series expansion to get closed-form expressions for easier computations. 

 The powers of the Laplacian involved in the expression can also affect graph learning since $(L^K)_{ij} = 0$ if the shortest path distance between nodes $i$ and $j$ is greater than $K$ \cite{hammond2011wavelets}. For example, the regularization function form of one step random walk (where $a$ = 2) and inverse cosine is approximately same. But the computation of cosine filter involves higher-order even powers of Laplacian whose non zero elements are determined by graph structure. Similarly, by knowing the spectrum of the Laplacian, it is possible to precompute the values of hyper-parameters to precisely design the form of the regularization function that spans in the spectrum. In the next section, we discuss a set of filters corresponding to the regularization functions in Table \ref{filters:table}.

\subsection{Regularized filters for GCNNs}

We take Equations in the second column of the Table \ref{filters:table}and plug into Equation \ref{kernel} to define the regularized filters. The results are summarized in the third column of Table \ref{filters:table}. Note that variants of some filters are already familiar in the literature as explained below.

\textbf{Case 1:} In a $p$-step random walk filter if we put $a=1$ and $p=1$, we get the filter corresponding to GCN \cite{kipf2016semi}. \textbf{Case 2:}  The filter used in IGCN \cite{li2019label} uses higher powers of the GCN filter. Hence it corresponds to a $p$-step random walk filter with a value of $p \geq 2$ and $a=1$. \textbf{Case 3:} As per  \cite{li2019label}, the graph filter of the label propagation (LP) method for semi-supervised learning takes the form of the regularized Laplacian filter. \textbf{Case 4:} GraphHeat filter is similar to a \textit{diffusion} filter together with an identity matrix.

\textbf{Computational complexity: }For \textit{regularized Laplacian}, learning complexity costs $\mathcal{O}(n^3)$ as it involves matrix inversion. For other filters, it is  $\mathcal{O}(K|E|)$, where $K$ is the maximum power of  the Laplacian involved in the approximation of the equation of the filter.


\subsection{Analysis of regularization behavior of GCNNs} \label{sec:reg_behave}

The idea is to  identify the \textit{regularization function} $r(\lambda)$ corresponding to the state-of-the-art networks from their filter definition. This helps to analyze the regularization capability of their filters. 

\textbf{Chebynet: }ChebyNet filtering \cite{defferrard2016convolutional} is defined as, $
y  = U(\sum_{i = 0}^{K-1} \theta_k \Lambda^k )U^T f \approx \mbox{exp} (\tilde{L})f$, where we assume parameters $\theta_k$ are the coefficients of the expansion of matrix exponential $\mbox{exp} (\tilde{L})$. The regularization function $r(\lambda) =  \big(g_\theta(\lambda)\big)^{-1} = (\sum_{i = 0}^{K-1}\theta_k \lambda^k)^{-1} \approx c.\mbox{exp }(-\lambda)$ where $c$ is a constant determined by $\theta_k$'s, the parameters learned by the network. Hence the regularization happening in ChebyNet is the exact opposite of the expected behavior since small eigenvalues are attenuated more and large ones are attenuated less as shown in Figure \ref{graph:gcnns}(a). To improve the regularization property of ChebyNet, the filtering shall be done as per the negative exponential or\textit{ diffusion regularization} function and for this, we shall change the filtering operation as, $y  = U(\sum_{i = 0}^{K-1} (-1)^k \theta_k \Lambda^k )U^T f \approx \mbox{exp} (-\tilde{L})f  $
with the constraint $\theta_k > 0$ to keep up with the desired regularization property. We can also bring an additional hyper-parameter $s > 0$ into the power of exponential function. Note that in this case, it corresponds to the \textit{diffusion} filter. 

\begin{figure*}[ht]
 
	\begin{center}
		\centerline{\includegraphics[scale=0.35]{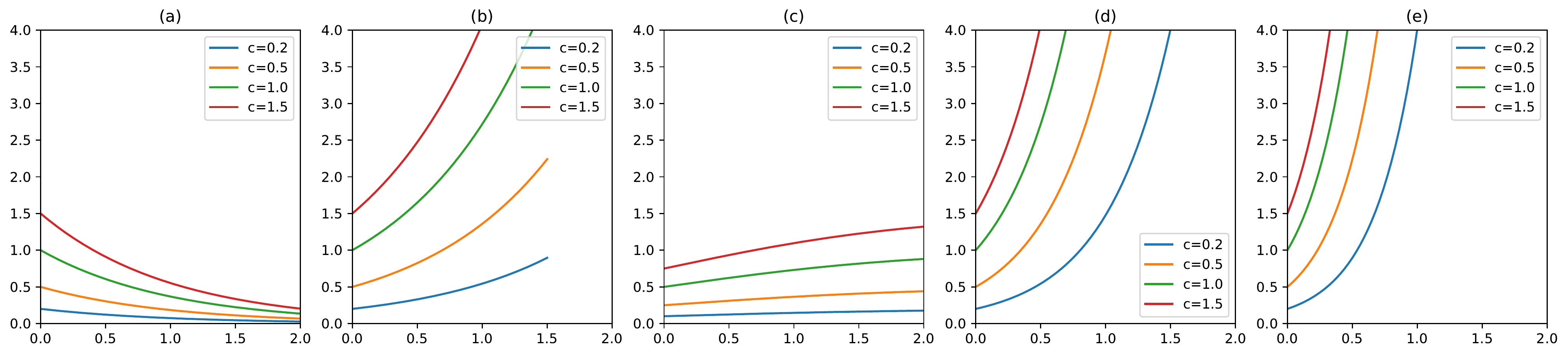}}
		\caption{Regularization function, $r(\lambda)$. (a) ChebyNet,
			(b) GCN, (c) GraphHeat, (d) IGCN for $k=2$ , (e) IGCN for $k=3$. All graphs are for ($c = \{0.2, 0.5, 1.0, 1.5\}$) }
		\label{graph:gcnns} 
	\end{center}

\end{figure*}

\textbf{GCN: } GCN filtering \cite{kipf2016semi} operation can be written as, $y  = \theta(I-\tilde{L})f  \approx \mbox{exp}(-\lambda)$,
where we assume parameter $\theta$ is 1 in the exponential approximation. The regularization function $r(\lambda) = c. (1 - \lambda)^{-1} \approx c. \mbox{exp}(\lambda) $ where  $c$ is a constant determined by the parameter $\theta$. Hence the regularization happening in GCN is as desired as shown in Figure \ref{graph:gcnns}(b).
Note that the filter of GCN corresponds to the first-order approximation of the \textit{diffusion} filter. So in effect, it is the diffusion process that harnesses the representation capability of GCN by changing the sign of parameters $(\theta_0 = -\theta_1 = \theta)$ compared to ChebyNet as explained in Section \ref{sec:gcns}.

\textit{Spectral analysis of renormalization trick  :} The trick refers to the process of adding self-loops \cite{kipf2016semi} to the graphs for stable training of the network. Wu et.al \cite{pmlr-v97-wu19e} have shown that adding self-loops helps to shrink the Laplacian spectrum form [0, 2] to [0, 1.5] which boosts the low pass filtering behavior. So the \textit{regularization function} remains in the same form as mentioned above, but the range of eigenvalues being in the interval [0, 1.5].


\textbf{GraphHeat: }GraphHeat filtering \cite{xu2019graphheat} operation can be written as,
$y = (\theta_0I + \theta_1 e^{-s\tilde{L}}) f$. The \textit{regularization function} $r(\lambda) = c.(1 +   \mbox{exp }(-s\lambda))^{-1} $ where we assume $c$ is a factor determined by $\theta_0$ and $\theta_1$. The regularization function is shown in Figure \ref{graph:gcnns}(c). 

\textbf{IGCN: }Improved graph convolutional network (IGCN) filtering \cite{li2019label} operation can be written as,
$y  = \theta(I-\tilde{L})^kf  \approx \mbox{exp}(-k\lambda)$.
The regularization function $r(\lambda) = c. (1 - \lambda)^{-k} \approx c. \mbox{exp}(k\lambda) $ where   $c$ is a constant determined by the parameter $\theta$. Hence the regularization happening in GCN is as desired as shown in Figure \ref{graph:gcnns} (d).

\section{Experiments}\label{sec:experiments}

The variants of proposed filters as in Table \ref{filters:table} is compared with state-of-the-art GCNNs namely ChebyNet \cite{defferrard2016convolutional}, GCN \cite{kipf2016semi}, GraphHeat \cite{xu2019graphheat}, and IGCN (RNM variant of the filter)  \cite{li2019label}. The comparison is also made with graph regularization based algorithms for semi-supervised learning namely - manifold regularization (ManiReg) \cite{belkin2006manifold}, semi-supervised embedding (SemiEmb) \cite{weston2012deep}, and label propagation (LP) \cite{zhu2003semi}. Other baselines used are Planetoid \cite{yang2016revisiting}, DeepWalk \cite{perozzi2014deepwalk}, and iterative classification algorithm  (ICA) \cite{lu2003link}. Citation network datasets \cite{yang2016revisiting} - Cora, Citeseer,  and Pubmed  are used for the study. In these graphs, nodes represent documents, and edges represent citations. The datasets also contain 'bag-of-words' feature vectors for each document. 

\begin{table}[t]
	\caption{Classification accuracy (in percentage $\pm$ standard deviation) along with average time taken for one epoch (in brackets) and parameters as triplets in the order of datasets ($f$ - number of filters).}
	\label{results}
	\centering
	\scalebox{0.925}{
		\begin{tabular}{llllll}
			\toprule
			Methods     	&	Cora		 & Citeseer		& Pubmed    \\
			\midrule
			ManiReg  & 59.5 & 60.1 & 70.7  \\
			SemiEmb  & 59.0 & 59.6 & 71.1  \\
			LP       & 68.0 & 45.3 & 63.0 \\
			DeepWalk & 67.2 & 43.2 & 65.3 \\
			ICA      & 75.1 & 69.1 & 73.9 \\
			PLanetoid& 75.7 & 64.7 & 77.2  \\
			MLP		 & 56.2 & 57.1 & 70.7  \\
			\bottomrule

			GCN  & 81.78 $\pm$ 0.64 (1.02)  & 70.73 $\pm$ 0.53 (1.03) & 78.48 $\pm$ 0.58 (1.21)   \\
			IGCN     &  80.49 $\pm$ 1.58 (1.02) & 68.86 $\pm$ 1.01 (1.06)  & 77.87 $\pm$ 1.55 (1.25)  \\
			ChebyNet  &  82.16 $\pm$ 0.74 (1.03) & 70.46 $\pm$ 0.70 (1.04) &  78.24  $\pm$ 0.43 (1.21)  \\
		 
			GraphHeat       & 81.38 $\pm$ 0.69 (1.04) & 69.90 $\pm$ 0.50 (1.05) &  75.64 $\pm$ 0.64 (1.34)   \\
		 
			\bottomrule
			Diffusion  & \textbf{83.12 $\pm$ 0.37} (1.11)  & 71.17 $\pm$ 0.43 (1.06)  &  \textbf{79.20 $\pm$ 0.36} (1.80)  \\
		 
			1-step RW  &  82.36 $\pm$ 0.34 (1.02) &  71.05 $\pm$ 0.34 (1.03) &  78.74 $\pm$  0.27 (1.21)  \\
		 
			2-step RW       & 82.51 $\pm$ 0.22 (1.03)  & 71.18 $\pm$ 0.59 (1.05)  &  78.64 $\pm$ 0.20 (1.29)   \\
		 
			3-step RW   &  82.56 $\pm$ 0.24 (1.05) & \textbf{71.21 $\pm$ 0.63} (1.04)  &  78.28 $\pm$ 0.36 (1.81)  \\
		 
			Cosine      &  75.53 $\pm$ 0.52 (1.03)  &  67.29 $\pm$ 0.64 (1.03) &  75.52 $\pm$ 0.53 (1.29)   \\
		 
			
			\bottomrule	
		\end{tabular}
	}
\end{table} 

\subsection{Experimental setup}
For GCNN models, network architecture proposed by Kipf et.al \cite{kipf2016semi} is used for the experiments. Networks with one layer, two layers, and three layers of graph convolution (GC) are used to evaluate all the filters under study. Along with this, networks with a GC layer followed by one and two layers of dense layers are also studied. In the experiments, it has been found that the network with two layers of GC has outperformed other architectures. It
that takes the form $$Z = \mbox{softmax}(\mathcal{F}(\tilde{L}) \; \mbox{ReLU}(\mathcal{F}(\tilde{L})X\Theta^{(1)}) \Theta^{(2)} )$$ where $\mathcal{F}(\tilde{L}) \in \mathbb{R}^{n \times n}$ is the filter, $X \in \mathbb{R}^{n \times d}$ is the input feature matrix, $\theta^{(1)} \in \mathbb{R}^{d \times c_1}$ is the filter parameters of first layer ($c_1$ is the number of filters) and $\theta^{(2)} \in \mathbb{R}^{c_1 \times c_2}$  is the filter parameters of second layer ($c_2$ is the number of filters). Note that the value of $c_2$ equals the total number of classes in the data output. The loss function optimized is the cross-entropy error over the labeled examples \cite{kipf2016semi} defined as follows.
$$\mathcal{L} = - \sum_{i \in \mathcal{Y}} \sum_{j=1}^{c_2} y_{ij} \;\mbox{ln} (Z_{ij})$$
where $\mathcal{Y}$ is the set of nodes whose labels are known and $y_{ij}$ is defined as 1 if label of node $i$ is $j$ and 0 otherwise.
For training, all the feature vectors and 20 labels per class are used. The same dataset split as used by Yang et.al  \cite{yang2016revisiting} is followed in the experiments. All the GCNN models corresponding to different filters are trained 10 times each according to a unique random seed selected at random. 
All models are trained for a maximum of 200 epochs using the ADAM optimizer \cite{kingma2014adam} with the learning rate fixed as 0.01. Early stopping is done in the training if the validation loss does not decrease for 10 consecutive epochs. Network weight initialization and normalization of input feature vectors of the nodes are done as per \cite{glorot2010understanding}.  Implementation is done using Tensorflow \cite{abadi2016tensorflow}. The hardware used for the experiments is Intel Xeon E5-2630 v3 2.4 GHz CPU, 80 GB RAM, and Nvidia GeForce GTX 1080-Ti GPU. Accuracy is used as the performance measure where models are evaluated on a test set of 1000 labeled examples. Since the focus of the study is on comparison of spectral filters, for GCNN variants mean accuracy along with standard deviation is reported. For algorithms other than GCNN, accuracy on a single dataset split reported by Kipf et.al \cite{kipf2016semi} is given, since we also follow the same dataset split. 

The higher powers of graph Laplacian is computed with the Chebyshev polynomial approximations \cite{hammond2011wavelets} for ChebyNet and $p$-step random walk filters considering its computational advantage.

Chebyshev polynomial of order $k$ is computed by the recurrence relation $$ T_k(x) = 2xT_{k-1}(x) - T_{k-2}(x),$$ where $T_0$ and $T_1$ is defined as 1 and $x$ respectively.
The polynomials form an orthogonal basis for $L^2([-1,1], \frac{dy}{\sqrt{1-y^2}})$, i.e, the space of square integrable functions with respect to the measure $dy/\sqrt{1-y^2}$. Hence when it comes to computing the powers of graph Laplacian ($L$), its spectrum has to be rescaled in the interval [-1, 1] as follows $$L_s = \frac{2}{\lambda_{max}}L - I_n,$$ where $L_s$ is the rescaled Laplacian, $\lambda_{max}$ is the maximum eigenvalue of $L$ and $n$ is the number of nodes in the graph.

But for GraphHeat, diffusion and cosine filters direct computation is done. As these filters involve a hyper-parameter being multiplied to the Laplacian matrix (unlike ChebyNet, and $p-$ step RW), Chebyshev approximation is not possible because when the rescaling of the matrix happens, the effect of this hyper-parameter multiplication is nullified.


\subsection{Results}\label{subsec:results}

The results are tabulated in Table \ref{results}. The best results are bolded. 
 The hyper-parameters used for the models are: dropout rate = 0.8, L2 regularization factor for the first layer weights = $5 \times 10^{-4}$, and the number of filters used in each layer is tuned from 16, 32, 64, and 128. These hyper-parameters are optimized on an additional validation set of 500 labeled examples as followed in \cite{kipf2016semi}. \textit{Diffusion filter} which is a matrix exponential is approximated for first $K+1$ terms, i.e, 
$g_\theta(\Lambda) = \theta \sum_{k = 0}^K (-1)^k \frac{1}{k!} \Lambda^k $ where there is only a single parameter $\theta$ is learned. 
For \textit{diffusion} filter, ChebyNet and GraphHeat  the value of $K$ used for the approximation of matrix exponential is tuned from $\{1, 2, 3, 4\}$. Similarly,  \textit{cosine filter} which involves cosine of a matrix is taken as $g_\theta(\Lambda) = \theta \sum_{k = 0}^K (-1)^k \frac{1}{2k!}  \Lambda^{2k}$. The value of $K$ is tuned from $\{1, 2, 3\}$. 
For \textit{diffusion} and GraphHeat filter, the value of $s$ is tuned in the range [0.5, 1.5] and for \textit{$p$-step randomwalk} filters, the value of $a$ is tuned in the range [2,24]. 
For GCN and IGCN, the author's code was reproduced for experiments.

\subsection{Discussion}\label{subsec:discussion}

Compared with graph regularization and label propagation methods, GCNN methods have better performance. The \textit{diffusion} filter has the highest accuracy in Cora and Pubmed. In Citeseer it is \textit{3-step RW} but the difference with \textit{diffusion} filter is negligible. Among methods other than GCNNs, ICA and Planetoid have better performance.
\textit{1-step RW} filter is an improvised version of GCN and IGCN in terms of regularization ability. Analyzing their comparison, \textit{1-step RW} filter performs better in all datasets. The 3 variants of RW filters have higher performance in Cora and Citeseer datasets compared with GCN and IGCN the reason being attributed to the tuning of parameter $a$ and in the case of Pubmed, RW filters and GCN have better performance than IGCN. 

\begin{figure}[ht]
 
	\begin{center}
		\centerline{\includegraphics[scale=0.35]{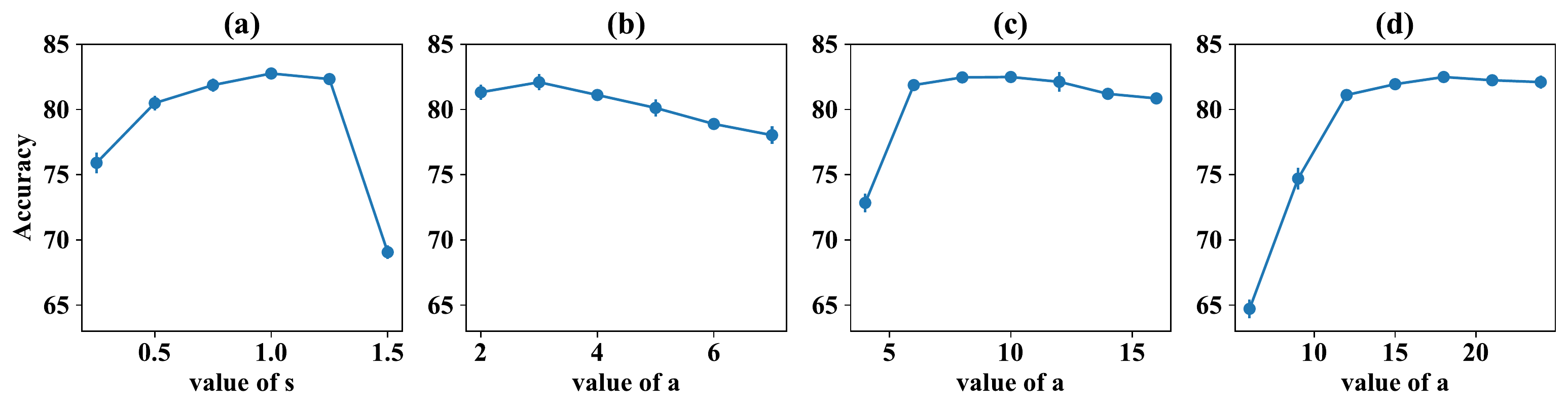}}
		\caption{Accuracy variation with hyper-parameters. (a) Diffusion,
			(b) 1-step RW, (c) 2-step RW, (d) 3-step RW   }
		\label{hyp:graph} 
	\end{center}
 
\end{figure}

ChebyNet, GraphHeat, and \textit{diffusion} filters calculate the first few powers of the Laplacian in their learning settings. It has been observed that the performance of the \textit{diffusion} filter is better than both despite using one parameter to learn. 
The performance of the \textit{cosine} filter is lower compared with other filters. The reason is due to the approximation of the \textit{cosine filter} that requires higher even powers of the Laplacian whose elements can be mostly zeros based on the graph structure as discussed in Section \ref{sec:reg_discuss}. It also requires skipping odd hopes in the graph that results in some information loss while learning.

The average time required for one training epoch is shown in the brackets along with the accuracy. It can be noted that the time taken increases as the higher order powers of the Laplacian and number of filters increase. 

\textbf{Effects of hyper-parameter tuning: }The variation in accuracy against hyper-parameters of the proposed filters applied to the Cora dataset is given in Figure \ref{hyp:graph}. For \textit{diffusion} filter ($K=3$), accuracy increases as $s$ is increased but there is a drop in accuracy after a peak value of $s$. Similar is the case of \textit{1-step RW}. In the case of \textit{2-step} and \textit{3-step RW} filters, accuracy increases as the value of $a$ increases but after a threshold point, accuracy variation is minimal. Similar is the trend observed for other datasets. For the experiments, the network with two layers of GC having 32 number of filters is used. 



\subsection{Decoupling low pass filtering from network learning}

To underline the practical impact of the framework we proposed, an experiment is done that decouples the low pass filtering from the network learning inspired by \cite{pmlr-v97-wu19e}.
 First, the filtering is done separately using $\mathcal{F}$ (with no learning parameters) and the resulting filtered features are given into a two-layer MLP (chosen after ablation studies among 1 \& 3 layer models). This helps to identify the impact of the choice of $r(\lambda)$ formulated in our work independent of the network parameters. The results are given in Table \ref{table}. 
 
  \begin{table}
 	
 	\caption{Accuracy of the filters.  Std dev. is given in brackets.}
 	\centering 	%
 	
 	\scalebox{1}{
 		\begin{tabular}{lllll}
 			\hline
 			Methods & $g(\lambda)$  & Cora  & Citeseer & Pubmed \\
 			\hline
 			MLP & -- & 56.50 (1.21) & 53.57 (2.71) &	71.87 (0.21)
 			\\

 			GCN & $(1-\lambda)$ & 77.99 (0.75) & 68.28 (0.51) & 76.05 (0.33)	 
 			\\
 			IGCN & $(1-\lambda)^K$& 81.44 (0.41) & 70.64 (0.67) & 78.46 (0.70)  
 			\\
 			ChebyNet & $\sum_{k=0}^{K-1} \lambda^k$ & 27.18 (1.46) & 28.63 (1.08) & 59.99 (0.58)
 			\\

 			GraphHeat & 1+ $\mbox{exp}(-s\lambda)$ & 73.57 (0.72) & 66.31 (0.59) & 73.50 (0.85)	  
 			\\
 			\hline 
 			Diffusion & $\mbox{exp}(-s\lambda)$ & 78.47 (0.32) & 68.47 (0.61) &	76.72 (0.59) 
 			\\
 			
 			1-step RW & $(a - \lambda)$ & 77.31 (0.52) & 67.95 (0.80) &	76.34 (0.47) 
 			\\
 			
 			2-step RW & $(a - \lambda)^2$ & 78.08 (0.62) & 69.41 (0.44) & 76.90 (0.23)
 			\\
 			
 			3-step RW & $(a - \lambda)^3$ & 78.98 (0.28) & 69.23  (0.71) & 71.49 (0.81)
 			\\
 			
 			Cosine & $\mbox{cos}(\lambda\pi/4)$& 70.38 (0.82)& 64.67 (0.62) & 72.01 (0.74)	 
 			\\
 			
 			\hline
 			
 		\end{tabular}
 		
 	}
 	\label{table}
 \end{table}

 \textbf{Observations:} In section \ref{sec:reg_behave}, we found that unlike other networks, $r(\lambda)$ of ChebyNet is the opposite of the required monotone property. This is evident from the results as its performance is lower than the rest including MLP. All other filters except the ChebyNet satisfy Theory \ref{th1}, and hence their performance is better. They also perform better than MLP which indicates the importance of the proposed framework. The results are lower compared with GCN architecture \cite{kipf2016semi} followed in the previous experiment. This points out to the possibility that the stochastic nature of neural networks may not guarantee the desired filtering properties. The case of the ChebyNet is an example as its performance is good in GCN architecture despite having contradictions with Theory \ref{th1} whereas its performance in the new experiment is lower.

\section{Conclusion}\label{sec:conclusion}

We formulated a framework to design regularized filters for GCNNs based on regularization in graphs modeled by graph Laplacian. A new set of regularized filters are proposed and identified the state-of-the-art filter designs as their special cases. The new filters designs proposed in the context of the framework has shown superior performance in semi-supervised classification task compared to conventional methods and state-of-the-art GCNNs. Considering the practical impacts of the framework we proposed, we also observed that the stochastic nature of the neural networks can possibly does not guarantee the desired low pass filtering property that has to be satisfied by the spectral GCNN filters. 

\section*{Acknowledgments}

\textit{The authors thank Subrahamanian Moosath K.S, IIST, Thiruvananthapuram, and Shiju S.S, Technical Architect, IBS Software Pvt. Ltd., Thiruvananthapuram for the productive discussions.}

 \appendix

\section{Regularization in graphs, support vector kernels and spectral GCNN filters}\label{app1}
\label{intro}

The support vector kernel $k: X \times X \rightarrow \mathbb{R}$ is considered as a similarity measure between a pair of data points in a space $X$. Support vector kernels can be formulated by solving the self-consistency condition (\cite{smola1998connection}),
\begin{equation} \label{self}
\langle k(x,.), Pk(x',.) \rangle = k(x,x')
\end{equation}
where $P$ is the regularization operator. 

From equation \ref{self}, Smola et.al  \cite{smola1998connection} found that given a regularization operator $P$, there exist a support vector kernel $k$ that minimize the regularized risk functional,

\begin{equation}\label{risk}
R_{reg}[f] = R_{emp} + \frac{\lambda}{2} \Vert Pf\Vert^2 
\end{equation}

that also enforce flatness (determined by $P$) in the feature space or Reproducing Kernel Hilbert Space (RKHS) of functions. They also found that given a support vector kernel $k$, regularization operator $P$ can be found out such that a regularization network \cite{smola1998connection} using $P$ is equivalent to a support vector machine that uses the kernel $k$. Note that $R_{emp}$ is the empirical loss function and $\lambda$ is a hyper-parameter. 

Smola et.al \cite{smola2003kernels} used the above  concepts to design support vector kernels on graphs. 
As shown in Equation \ref{kernel}, graph Laplacian ($\tilde{L}$) can be used to define a smoothness functional on graphs that aids in designing regularization operators. They proved that \textit{if $H$ is the image of $\mathbb{R}^n$ under $P \in \mathbb{R}^{n \times n}$ (a positive semidefinite regularization matrix), then $H$ whose dot product is defined as $\langle f, Pf \rangle $ is a  RKHS and the corresponding support vector kernel is defined as $k(i,j) = [P^{-1}]_{ij}$, where $P^{-1}$ denotes the psuedo-inverse if $P$ is not invertible.}

Now if we consider the case of GCNN filters, we can observe that if the parameters $\theta$s associated with the filter definition maintains positive definiteness of the matrix $\mathcal{F} = \big(r(\tilde{L})\big)^{-1}$, then the filter can be considered as valid and equivalent support vector kernel that solves the regularized risk functional in Equation \ref{risk}. The corresponding regularization behavior induced by $P$ in equation \ref{risk} can be attributed to the corresponding \textit{regularization function $r(\lambda)$}.
 
\end{document}